\documentclass[letterpaper, 10 pt, conference]{ieeeconf}  

\IEEEoverridecommandlockouts                              

\usepackage{graphics} 
\usepackage{epsfig} 
\usepackage{mathptmx} 
\usepackage{times} 

\usepackage{amsmath} 
\usepackage{amssymb}  
\usepackage{amsthm} 

\usepackage{pifont}
\usepackage{tikz}

\newcommand{\rotbox}[1]{%
    \tikz[baseline=-0.5ex]
    \draw[rotate=#1] (-0.12,-0.06) rectangle (0.12,0.06);
}
\newcommand{\cmark}{\ding{51}}%
\newcommand{\xmark}{\ding{55}}%

\DeclareMathOperator{\Log}{Log}
\DeclareMathOperator{\Exp}{Exp}

\theoremstyle{definition}
\newtheorem*{preliminaries*}{Preliminaries}
\theoremstyle{plain}

\newtheorem{theorem}{Theorem}
\newtheorem{lemma}{Lemma}

\title{\LARGE \bf
Trajectory Bundle Method in SE(3) for Black-Box Fixed-Wing Aircraft Trajectory Optimization
}

\author{Matthew D. Osburn$^{1}$, Cameron K. Peterson$^{2}$, John L. Salmon$^{3}$
\thanks{$^{1}$Matthew D. Osburn is a PhD student in the Department of Electrical and Computer Engineering, 
        Brigham Young University,
Provo, UT 84602, USA
        {\tt\small osburnm@byu.edu}}%
\thanks{$^{2}$Cameron K. Peterson is a professor with the Department of Electrical and Computer Engineering, Brigham Young University,
Provo, UT 84602, USA
        {\tt\small cammy.peterson@byu.edu}}%
\thanks{$^{2}$John L. Salmon is a professor with the Department of Mechanical Engineering, Brigham Young University,
Provo, UT 84602, USA
        {\tt\small johnsalmon@byu.edu}}%
}

\begin{document}

\maketitle
\thispagestyle{empty}
\pagestyle{empty}

\begin{abstract}
Dynamically feasible trajectory optimization for rigid-body systems is naturally formulated on the special Euclidean group $\mathrm{SE}(3)$ but is challenging when dynamics are available only as black-box computations without derivatives. This paper formulates the Trajectory Bundle Method (TBM) for motion planning implicitly on $\mathrm{SE}(3)$. Bundles are constructed in the Lie algebra and propagated through nonlinear rigid-body dynamics using exponential and logarithmic maps, enabling derivative-free planning of non-Euclidean trajectories. We show that Euclidean TBM interpolation error is bounded quadratically by bundle diameter and extend this result to $\mathrm{SE}(3)$, where the bound additionally depends on a local Lipschitz constant of the Log map. Numerical experiments corroborate these bounds. Finally, we demonstrate SE(3) TBM by optimizing an acrobatic, collision-free fixed-wing maneuver through a rotated aperture without explicit models or derivatives of the vehicle dynamics, aerodynamics, or collision model.

\end{abstract}

\section{Introduction}

Trajectory optimization provides a systematic framework for generating dynamically feasible motion that satisfies state, control, and environmental constraints while optimizing a desired performance objective~\cite{kroger_-line_2010}. Unlike purely geometric path planning, trajectory optimization incorporates the system dynamics directly, allowing the resulting motion to be dynamically feasible and suitable for execution by a physical system~\cite{kelly_introduction_2017}. This capability is particularly important in robotics and unmanned systems, where vehicle dynamics, actuator limits, collision avoidance, and mission objectives must often be considered simultaneously~\cite{tedrake_underactuated_2024}.

Commonly, trajectory optimization is posed as a finite-horizon optimal-control problem~\cite{malyuta_convex_2022}. For nonlinear systems, computation time can dramatically increase because of the reliance on general nonlinear solvers and poor scaling as the number of variables increases~\cite{kelly_introduction_2017}. Methods such as sequential convex programming (SCP) have been developed to reduce computation time by repeatedly forming convex approximations of the optimization problem and iteratively improving the trajectory~\cite{schulman_motion_2014, bonalli_gusto_2019, mao_successive_2017}. These approaches are effective when accurate dynamics models and their derivatives are available, but can become more difficult to apply when derivative information is unavailable or expensive to obtain, as may occur with black-box or learned dynamics models \cite{manchester_derivative-free_2016, sukhija_gradient-based_2023}.

\begin{figure}[t]
    \centering
    \includegraphics[width=\linewidth]{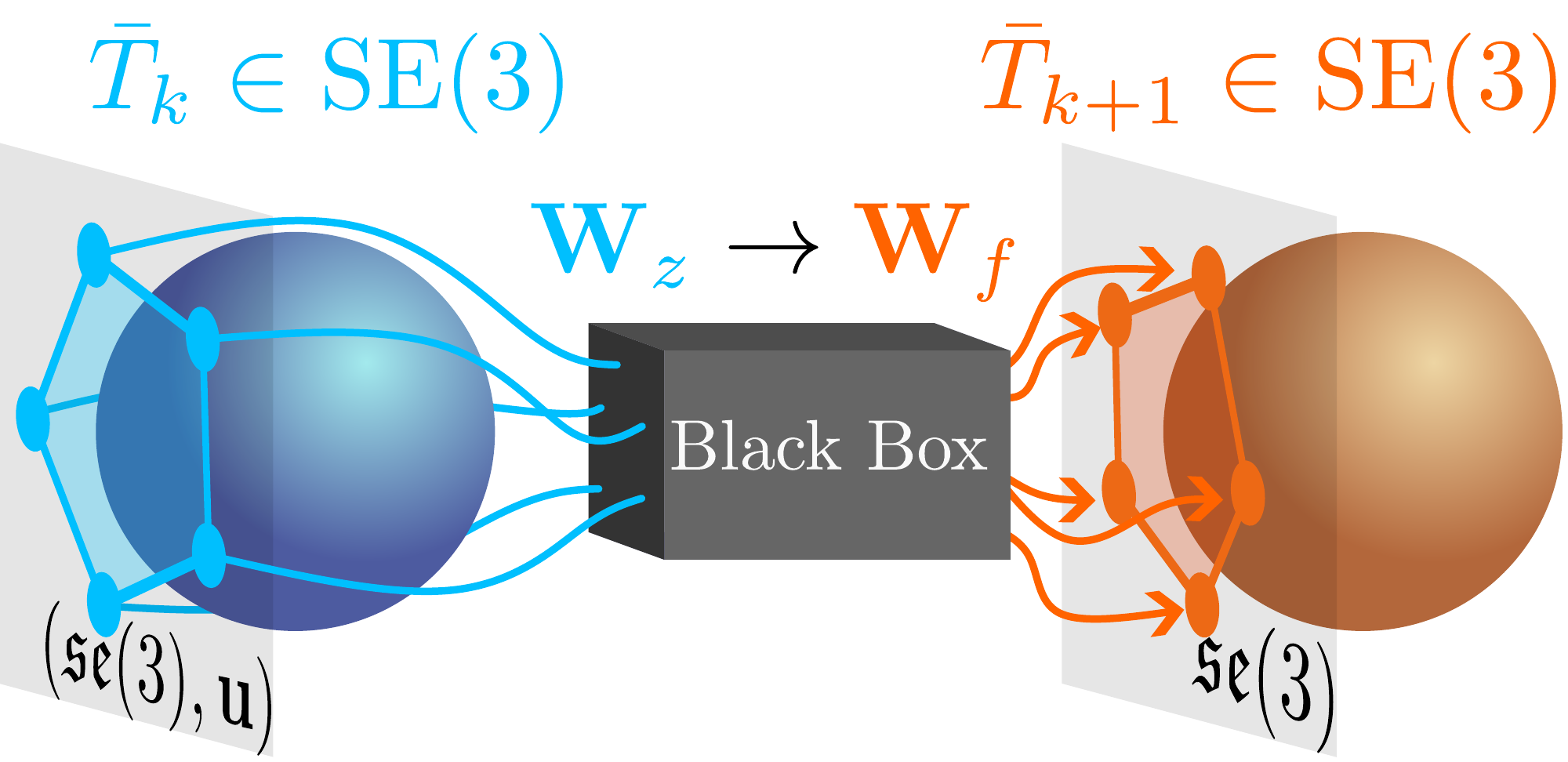}
    \caption{States in $\mathfrak{se}(3)$ and controls $\mathbf{u}$ are sampled to form an input matrix $\mathbf{W}_z$. This matrix is passed through a black-box system, such as a flight simulator, to form an output matrix $\mathbf{W}_f$. These two sampled matrices form a trajectory bundle that approximates system dynamics and enables sequential convex trajectory optimization.}
    \vspace{-5mm}
    \label{fig:tbm-illustration}
\end{figure}

The Trajectory Bundle Method (TBM), recently introduced by Tracy et al., addresses this limitation by replacing derivative-based local approximations with sampled trajectory bundles~\cite{tracy_trajectory_2025}. Most SCP schemes linearize the problem by using the Jacobians of the dynamics at every iteration. Instead, TBM evaluates the dynamics, costs, and constraints at nearby samples and uses linear interpolation to approximate non-linear relationships between input and output. The sampled rollouts can be evaluated in parallel and are used to construct convex subproblems without requiring the derivatives of the dynamics to be explicitly available or known. TBM has been demonstrated on double-integrator collision avoidance, quadrotor tracking, and cart-pole swing-up with learned neural dynamics~\cite{tracy_trajectory_2025}.

The existing TBM formulation constructs and interpolates bundles in Euclidean vector spaces or Euler angles. However, rigid-body motion does not have this structure: orientation evolves on the special orientation group, $\mathrm{SO}(3)$, and a full six-degree-of-freedom pose evolves on the special Euclidean group, $\mathrm{SE}(3)$~\cite{li_mathematical_1994}. Encoding orientation using local Euclidean coordinates introduces coordinate singularities and gimbal lock~\cite{hemingway_perspectives_2018} and limits the range of motion that can be planned in optimization. These issues are particularly important in planning problems that require large rotations or collision constraints that depend on vehicle attitude as well as position~\cite{liu_search-based_2018}.

We address this shortcoming of the TBM by sampling on a nominal $\mathrm{SE}(3)$ tangent space, propagating samples through potentially black-box rigid-body dynamics on $\mathrm{SE}(3)$, and mapping samples back to a tangent space where the local input-output relationship can be approximated. Although Lie-group methods are well established in estimation~\cite{barrau_intrinsic_2014}, control~\cite{zheng_integrated_2023, bullo_proportional_1995}, and trajectory optimization~\cite{schulman_motion_2014, watterson_trajectory_2018}, to the best of the authors' knowledge, the TBM has not previously been formulated directly on $\mathrm{SE}(3)$, nor has the interpolation error introduced by the $\mathrm{SE}(3)$ bundle approximation been characterized.

The contributions of this work are as follows:
\begin{enumerate}
\item A TBM formulation on $\mathrm{SE}(3)$ that constructs trajectory bundles in Lie-algebra tangent spaces and propagates samples through nonlinear black-box dynamics using the exponential and logarithmic maps. 
\item An interpolation-error characterization for TBM on $\mathrm{SE}(3)$, showing quadratic dependence on bundle diameter scaled by a local Lipschitz constant. 
\item Application of $\mathrm{SE}(3)$ TBM to collision-free acrobatic fixed-wing trajectory planning through a rotated aperture without explicit models or derivatives of the vehicle dynamics or collision computations.

\end{enumerate}


The remainder of this paper proceeds as follows: Section~\ref{sec:background} provides relevant background information; 
Section~\ref{sec:methods} develops the proposed $\mathrm{SE}(3)$ TBM formulation and characterizes the approximation error for the bundles; %
Section~\ref{sec:results} presents numerical results; and Section~\ref{sec:conclusion} concludes the paper. 

\section{Background}\label{sec:background}

\subsection{Rigid-Body Motion on $\mathrm{SE}(3)$}

The pose of a rigid body is represented by the special Euclidean group
\begin{equation}
T =
\begin{bmatrix}
R & \mathbf{p}\\
0_{1\times3} & I_{1\times1}
\end{bmatrix}
\in \mathrm{SE}(3),
\end{equation}
where $R\in\mathrm{SO}(3)$ is the orientation and $\mathbf{p}\in\mathbb{R}^3$ is the position. Unlike Euclidean vector spaces, $\mathrm{SE}(3)$ is a nonlinear manifold and does not admit globally valid vector addition.

The associated Lie algebra $\mathfrak{se}(3)$ is the tangent space of $\mathrm{SE}(3)$ at the identity. A vector $\boldsymbol{\xi}=[\boldsymbol{\rho}^\top,\boldsymbol{\varphi}^\top]^\top\in\mathbb{R}^6$ is mapped to $\mathfrak{se}(3)$ through the hat operator,
\begin{equation}
\boldsymbol{\xi}^\wedge =
\begin{bmatrix}
[\boldsymbol{\varphi}]_\times & \boldsymbol{\rho}\\
0_{1\times3} & 0_{1\times1}
\end{bmatrix},
\end{equation}
where $[\cdot]_\times$ is the skew-symmetric cross-product matrix, $\rho$ is the translational component and $\varphi$ is the rotational component of the pose. The inverse mapping of the hat operator is denoted by $(\cdot)^\vee$.

The exponential and logarithmic maps provide local mappings between the Lie algebra and group,
\begin{align}
\Exp(\boldsymbol{\xi}^\wedge)&=
\begin{bmatrix}
\Exp([\boldsymbol{\varphi}]_\times) & J_\ell(\boldsymbol{\varphi})\boldsymbol{\rho}\\
0_{1\times3} & I_{1\times1}
\end{bmatrix}, \\
\qquad
\Log(T)^\vee &=
\begin{bmatrix}
J_\ell(\boldsymbol{\varphi})^{-1}\mathbf{p}\\
\boldsymbol{\varphi}
\end{bmatrix},
\end{align}
where $J_\ell(\boldsymbol{\varphi})$ is the left Jacobian of $\mathrm{SO}(3)$.

Throughout this work, pose perturbations follow a right-perturbation convention. For a nominal pose $\bar T$, a tangent-space displacement $\boldsymbol{\xi}$ is mapped to $\mathrm{SE}(3)$ as
\begin{equation}
T=\bar T\Exp(\boldsymbol{\xi}^\wedge),
\qquad
\boldsymbol{\xi}=\Log(\bar T^{-1}T)^\vee.
\label{eq}
\end{equation}
Thus, $\Exp(\boldsymbol{\xi}^\wedge)$ maps a $\mathfrak{se}(3)$ local tangent-space displacement onto $\mathrm{SE}(3)$ , while $\Log(\bar T^{-1}T)^\vee$ maps from $\mathrm{SE}(3)$ back to the local coordinates on $\mathfrak{se}(3)$. See the grey tangent planes in Fig.~\ref{fig:tbm-illustration} for an illustrative example.

\subsection{Trajectory Bundle Method}

Consider the discrete-time trajectory optimization problem
\begin{align}  \label{eq:trajopt}
\min_{\mathbf{x}_{0:K},\mathbf{u}_{0:K-1}} \quad 
& \ell_f(\mathbf{x}_K)+\sum_{k=0}^{K-1}\ell_k(\mathbf{x}_k,\mathbf{u}_k)  \\
\text{s.t.}\quad &
\mathbf{x}_{k+1}=f_k(\mathbf{x}_k,\mathbf{u}_k), \nonumber \\
& h_k(\mathbf{x}_k,\mathbf{u}_k)=0, \qquad \nonumber\\
& g_k(\mathbf{x}_k,\mathbf{u}_k)\leq 0. \nonumber
\end{align}
where $\mathbf{x}_k\in\mathbb{R}^{n_x}$ is the trajectory state, $\mathbf{u}_k\in\mathbb{R}^{n_u}$ is the control input, $\ell_f$ is the terminal cost, $\ell_k$ is the stage cost, $f_k$ denotes the discrete system dynamics, and $h_k$ and $g_k$ represent equality and inequality constraints, respectively, all of which may be nonlinear. The trajectory contains $K+1$ knot points at which the trajectory is optimized and $K$ dynamic intervals.

The TBM constructs derivative-free local approximations of the nonlinear functions defined in Eq.~\eqref{eq:trajopt} using sampled function evaluations rather than Jacobians. Let $\Phi(\mathbf{z}):\mathbb{R}^{n_z}\rightarrow \mathbb{R}^{n_y}$ be the black-box function that maps from the input states and controls to an output value. Define the combined state-input vector as
\begin{equation}
\mathbf{z}_i=
\begin{bmatrix}
\mathbf{x}_i\\
\mathbf{u}_i
\end{bmatrix}
\in\mathbb{R}^{n_z},
\qquad n_z=n_x+n_u,
\end{equation}
where $n_x$ is the number of states in a sample, $n_u$ is the number of control inputs in a sample, $n_z$ is then the input dimension, and $n_y$ is the output dimension.
Let $n_s$ denote the number of samples in a bundle. The input and output sample matrices are
\begin{equation}
\mathbf{W}_z = \begin{bmatrix}\mathbf{z}_1 & \cdots & \mathbf{z}_{n_s}\end{bmatrix},
\qquad
\mathbf{W}_\Phi = \begin{bmatrix}\Phi(\mathbf{z}_1) & \cdots & \Phi(\mathbf{z}_{n_s})\end{bmatrix}.
\end{equation}
These two matrices form a trajectory bundle. Any point $\mathbf{z}$ in the convex hull of the input samples can be represented as
\begin{align}
\mathbf{z}&=\mathbf{W}_z\boldsymbol{\alpha},
\quad \boldsymbol{\alpha}\in A_{n_s} =\left\{\boldsymbol{\alpha}\in\mathbb{R}^{n_s}\mid
\alpha_i\geq0,\ \mathbf{1}^\top\boldsymbol{\alpha}=1\right\}. \label{eq:z_conv_sum}
\end{align}
TBM then approximates the corresponding function value using the output matrix as
\begin{equation}
\Phi(\mathbf{W}_z\boldsymbol{\alpha})
\approx
\mathbf{W}_\Phi\boldsymbol{\alpha}.
\label{eq:tbm_interp}
\end{equation}

Applying Eq.~\eqref{eq:tbm_interp} to the dynamics, cost residuals, and constraints produces a convex local subproblem in the interpolation weights $\boldsymbol{\alpha}$. For example, using the discrete dynamics function $f$ the multiple-shooting dynamics are approximated at knot point $k+1$ by
\begin{equation} \label{eq:dynamics_approx}
\mathbf{W}_{x,k+1}\boldsymbol{\alpha}_{k+1}
\approx
\mathbf{W}_{f,k}\boldsymbol{\alpha}_k.
\end{equation}

Typically, slack variables are introduced to the constraints in Eq.~\ref{eq:trajopt} to maintain feasibility and permit infeasible initial guesses. After each iteration, bundles are regenerated about the updated trajectory until convergence. Because TBM requires only sampled evaluations of the nonlinear functions, these evaluations can be parallelized and do not require analytic or automatic derivatives.

\section{Methods}\label{sec:methods}

This section outlines the process of formulating TBM on $\mathrm{SE}(3)$ and characterizes both the Euclidean bundle interpolation error and the additional effect of mapping that approximation onto $\mathrm{SE}(3)$.

\subsection{Trajectory Bundle Method on SE(3)} 

Let $r$ denote the TBM optimization iteration and $k$ the trajectory knot point. At each optimization iteration, trajectory bundles are constructed locally about the current nominal pose $\bar T_k^{(r)}$. For each knot point, $n_s$ pose and input perturbations $\boldsymbol{\xi}_{k,i}^{(r)}\in\mathbb{R}^6$ and $\delta\mathbf{u}_{k,i}^{(r)}\in\mathbb{R}^{n_u}$ are sampled in the local Lie-algebra coordinates of $\bar T_k^{(r)}$ and mapped onto $\mathrm{SE}(3)$ using
\begin{equation}
T_{k,i}^{(r)}
=
\bar T_k^{(r)}
\Exp\!\left[\left(\boldsymbol{\xi}_{k,i}^{(r)}\right)^\wedge\right].
\end{equation}
These perturbed states are then propagated through the black-box dynamics $f$ and mapped back to the Lie algebra associated with the next nominal pose at $k+1$,
\begin{equation}
\boldsymbol{\eta}_{k,i}^{(r)}
=
\Log\!\left[
\left(\bar T_{k+1}^{(r)}\right)^{-1}
 f_k\!\left(T_{k,i}^{(r)},
\bar{\mathbf{u}}_k^{(r)}+\delta\mathbf{u}_{k,i}^{(r)}\right)
\right]^\vee,
\end{equation} to build the sampled output of the system.  The sampled output is in the tangent space of the $k+1$ knot, allowing Eq.~\ref{eq:dynamics_approx} to be used to enforce dynamic feasibility on $\mathfrak{se}(3)$.
In this case, the dynamics $f$ plays the role of the black-box function $\Phi$ from the previous section.
Define the local bundle input sample as
\begin{equation}
\mathbf{z}_{k,i}^{(r)}=
\begin{bmatrix}
(\boldsymbol{\xi}_{k,i}^{(r)})^\top,
(\delta\mathbf{u}_{k,i}^{(r)})^\top
\end{bmatrix}^\top.
\end{equation}
The corresponding sample matrices are then
\begin{equation}
\mathbf{W}_{z,k}^{(r)}
=
\begin{bmatrix}
\mathbf{z}_{k,1}^{(r)} & \cdots & \mathbf{z}_{k,n_s}^{(r)}
\end{bmatrix},
\mathbf{W}_{f,k}^{(r)}
=
\begin{bmatrix}
\boldsymbol{\eta}_{k,1}^{(r)} & \cdots & \boldsymbol{\eta}_{k,n_s}^{(r)}
\end{bmatrix},
\end{equation}
 which define the trajectory bundle for the $k$th segment of the trajectory.  These matrices can be used to rewrite Eq.~\ref{eq:trajopt} as a convex subproblem within an SCP optimization routine. After solving the convex subproblem at iteration $r$, we update the nominal knot points and obtain $\bar T_k^{(r+1)}$. New perturbations are then sampled in these updated local coordinates and propagated to repeat the process. This scheme allows bundles to be reconstructed and propagated on $\mathrm{SE}(3)$ at every TBM iteration.

\subsection{Error Characterization}

This subsection characterizes the local interpolation error introduced by the TBM as a function of bundle diameter. We first establish a generic bound for Euclidean TBM interpolation and then extend the analysis to the proposed $\mathrm{SE}(3)$ formulation, where additional error arises from the Lie group's nonlinear geometry.

\begin{preliminaries*} 
Given an input sample matrix $\mathbf{W}_z$
with interpolation weights $\boldsymbol{\alpha}=[\alpha_1,\ldots,\alpha_{n_s}]^\top\in A_{n_s}$, let
$
S_z=\operatorname{conv}\{\mathbf{z}_1,\ldots,\mathbf{z}_{n_s}\}
$
be the convex hull of the input samples with diameter
${
\Delta
=\operatorname{diam}(S_z)
=\max_{\mathbf{z}_a,\mathbf{z}_b\in S_z}
\left\|\mathbf{z}_a-\mathbf{z}_b\right\|.
}$
Recall that any $\mathbf{z}\in S_z$ can be defined using a weigthed sum as shown in Eq.~\ref{eq:z_conv_sum}.

Let $\Phi:\mathbb{R}^{n_z}\rightarrow\mathbb{R}^{n_y}$ be the black-box mapping from inputs and control to outputs, and assume to be continuous and twice differentiable. For smooth dynamical systems, such as aircraft models, it is reasonable to assume that the second derivative of $\Phi$ is bounded by a constant $\kappa_\Phi$ over $S_z$.
Define the TBM approximation in terms of $\mathbf{z}$ as 
\begin{equation}\label{eq:tbm_approx_def}
\widetilde{\Phi}(\mathbf{z})
=\sum_{i=1}^{n_s}\alpha_i\Phi(\mathbf{z}_i).
\end{equation}

Define $D\Phi(\mathbf{z})$ as the Jacobian of $\Phi$ at the point $\mathbf{z}$.
For a unit direction $\delta\mathbf{z}\in Q=\{\mathbf{q}\in\mathbb{R}^{n_z} \mid \lVert \mathbf{q} \rVert =1 \}$, define the vector-valued directional second derivative as 
\begin{equation}
D^2\Phi(\mathbf{z})[\delta\mathbf{z},\delta\mathbf{z}]
=
\begin{bmatrix}
\delta\mathbf{z}^\top H_{\Phi_1(\mathbf{z})}\delta\mathbf{z}, \cdots,\delta\mathbf{z}^\top H_{\Phi_{n_y}(\mathbf{z})}\delta\mathbf{z}
\end{bmatrix}^\top.
\end{equation}

Define the Euclidean TBM approximation error as 
${
    \mathbf{e}_\Phi(\mathbf{z})
\triangleq
\Phi(\mathbf{z})-\widetilde{\Phi}(\mathbf{z}).
}$

\end{preliminaries*}

\begin{lemma}\label{lem:tbm_bound}

Assuming that the second derivative of $\Phi$ is bounded on $S_z$ in all directions $\delta\mathbf{z}$ by a constant $\kappa_\Phi$ such that
\begin{align}\label{eq:bounded_curvature}
    & \left\Vert D^2\Phi(\mathbf{z})[\delta\mathbf{z},\delta\mathbf{z}] \right\Vert \leq \kappa_\Phi, \quad \forall \mathbf{z}\in S_z, \forall \delta\mathbf{z}\in Q,
\end{align}
then the Euclidean TBM approximation error is bounded quadratically by the trajectory bundle diameter:
\begin{equation}\label{eq:error_bound}
    \left\Vert \mathbf{e}_\Phi(\mathbf{z})) \right\Vert \leq \frac{\kappa_\Phi}{2} \Delta^2.
\end{equation}
\end{lemma}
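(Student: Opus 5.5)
The plan is to Taylor-expand each sample value $\Phi(\mathbf{z}_i)$ about the interpolation point $\mathbf{z}=\mathbf{W}_z\boldsymbol{\alpha}$ itself, rather than about a vertex. The point is that the first-order terms then cancel exactly because of the barycentric identities $\sum_i\alpha_i=1$ and $\sum_i\alpha_i\mathbf{z}_i=\mathbf{z}$.

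Concretely, fix $\mathbf{z}\in S_z$ and set $\mathbf{d}_i=\mathbf{z}_i-\mathbf{z}$. By convexity of $S_z$, the segment $\mathbf{z}+t\mathbf{d}_i$, $t\in[0,1]$, lies in $S_z$, so the curvature hypothesis \eqref{eq:bounded_curvature} applies along it. We will use Taylor's theorem with the vector-valued integral remainder, to avoid a mean-value point that differs per component:
\begin{equation*}
\Phi(\mathbf{z}_i)=\Phi(\mathbf{z})+D\Phi(\mathbf{z})\mathbf{d}_i+\mathbf{r}_i,\qquad
\mathbf{r}_i=\int_0^1(1-t)\,D^2\Phi(\mathbf{z}+t\mathbf{d}_i)[\mathbf{d}_i,\mathbf{d}_i]\,dt .
\end{equation*}

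Next we multiply by $\alpha_i$ and sum. Since $\sum_i\alpha_i\mathbf{d}_i=\mathbf{W}_z\boldsymbol{\alpha}-\mathbf{z}=0$ and $\sum_i\alpha_i=1$, this gives
\begin{equation*}
\widetilde{\Phi}(\mathbf{z})=\Phi(\mathbf{z})+\sum_i\alpha_i\mathbf{r}_i,
\qquad\text{so}\qquad
\mathbf{e}_\Phi(\mathbf{z})=-\sum_i\alpha_i\mathbf{r}_i .
\end{equation*}

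To bound each remainder we use homogeneity: $D^2\Phi[\mathbf{d},\mathbf{d}]=\|\mathbf{d}\|^2D^2\Phi[\hat{\mathbf{d}},\hat{\mathbf{d}}]$ with $\hat{\mathbf{d}}\in Q$ (the case $\mathbf{d}_i=0$ is trivial). Together with the hypothesis and $\int_0^1(1-t)\,dt=\tfrac12$, this yields $\|\mathbf{r}_i\|\le\frac{\kappa_\Phi}{2}\|\mathbf{d}_i\|^2$. Both $\mathbf{z}_i$ and $\mathbf{z}$ lie in $S_z$, so $\|\mathbf{d}_i\|\le\Delta$. The triangle inequality and $\sum_i\alpha_i=1$ then give $\|\mathbf{e}_\Phi(\mathbf{z})\|\le\sum_i\alpha_i\frac{\kappa_\Phi}{2}\|\mathbf{d}_i\|^2\le\frac{\kappa_\Phi}{2}\Delta^2$, which is \eqref{eq:error_bound}.

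The main point of care is the remainder for a vector-valued $\Phi$. The Lagrange form would give a different intermediate point for each component, and the hypothesis bounds the norm of the whole vector $D^2\Phi[\cdot,\cdot]$ at a single point, so the componentwise version does not combine cleanly. The integral form avoids this, since the norm passes inside the integral and the hypothesis applies pointwise along the segment. This step requires $\Phi$ to be $C^2$ on a neighborhood of $S_z$, or at least for $t\mapsto\Phi(\mathbf{z}+t\mathbf{d}_i)$ to have an integrable second derivative, which is how we will read the paper's "twice differentiable" assumption. One could sharpen the bound to $\frac{\kappa_\Phi}{2}\sum_i\alpha_i\|\mathbf{z}_i-\mathbf{z}\|^2$, but the stated bound is all that is needed.
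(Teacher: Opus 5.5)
Your proposal is correct and follows the paper's proof essentially step for step: expand each $\Phi(\mathbf{z}_i)$ about $\mathbf{z}$, cancel the linear terms via $\sum_i\alpha_i\mathbf{d}_i=0$, bound each remainder by $\frac{\kappa_\Phi}{2}\|\mathbf{d}_i\|^2$, and finish with $\|\mathbf{d}_i\|\le\Delta$ and $\sum_i\alpha_i=1$. Where you use the integral form of the remainder along the segment (which lies in $S_z$ by convexity), the paper only cites ``Taylor's remainder theorem''; your version fixes the vector-valued case, since a componentwise Lagrange remainder would not combine with the norm bound in the hypothesis.
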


\begin{proof}
Let the displacement of sample $\mathbf{z}_i$ from the point $\mathbf{z}$ be
\begin{equation}
    \mathbf{d}_i=\mathbf{z}_i-\mathbf{z}.
\end{equation}

Multiplying by $\alpha_i$ and summing over the $n_s$ samples gives
\begin{align} \label{eq:barycentric_displacement}
\sum_{i=1}^{n_s}\alpha_i\mathbf{d}_i
&=
\sum_{i=1}^{n_s}\alpha_i\mathbf{z}_i
-
\mathbf{z}\sum_{i=1}^{n_s}\alpha_i 
=\mathbf{z}-\mathbf{z}=0.
\end{align} 

Expanding the Taylor-Series of each sampled output $\Phi(\mathbf{z}_i)$ about $\mathbf{z}$ and combining the second and higher-order terms into the remainder $\mathbf{r}_i$ gives
${
    \Phi(\mathbf{z}_i)
=
\Phi(\mathbf{z})
+D\Phi(\mathbf{z})\mathbf{d}_i
+\mathbf{r}_i.
}$


We multiply each $i$th sample's Taylor series by $\alpha_i$, sum over all samples, and use Eq.~\eqref{eq:z_conv_sum}, Eq.~\eqref{eq:tbm_approx_def}, and Eq.~\eqref{eq:barycentric_displacement} to give
\begin{align}\label{eq:summed_taylor}
\sum_{i=1}^{n_s}\alpha_i \Phi(\mathbf{z}_i) &=  \sum_{i=1}^{n_s}\alpha_i\Phi(\mathbf{z}) + 
\sum_{i=1}^{n_s}\alpha_iD\Phi(\mathbf{z})\mathbf{d}_i
+\sum_{i=1}^{n_s}\alpha_i\mathbf{r}_i \nonumber \\
\widetilde{\Phi}(\mathbf{z})
&=
\Phi(\mathbf{z})\sum_{i=1}^{n_s}\alpha_i
+D\Phi(\mathbf{z})
\sum_{i=1}^{n_s}\alpha_i\mathbf{d}_i
+\sum_{i=1}^{n_s}\alpha_i\mathbf{r}_i  \nonumber \\
\widetilde{\Phi}(\mathbf{z})&=
\Phi(\mathbf{z})
+\sum_{i=1}^{n_s}\alpha_i\mathbf{r}_i.
\end{align} 

Rearranging terms and expressing the result using the Euclidean TBM error yields
\begin{equation}\label{eq:simplified_taylor}
\mathbf{e}_\Phi(\mathbf{z})
=
-\sum_{i=1}^{n_s}\alpha_i\mathbf{r}_i.
 \end{equation}

By Taylor's remainder theorem and Eq.~\eqref{eq:bounded_curvature},
${\label{eq:bounded_remainder}
\left\|\mathbf{r}_i\right\|
\leq
\frac{\kappa_\Phi}{2}
\left\|\mathbf{d}_i\right\|^2.}
$
Taking the norm of Eq.~\eqref{eq:simplified_taylor} and applying the triangle inequality gives 
\begin{equation} \label{eq:taylor_error_norm}
\left\|\mathbf{e}_\Phi(\mathbf{z})\right\|
\leq
\frac{\kappa_\Phi}{2}
\sum_{i=1}^{n_s}\alpha_i
\left\|\mathbf{d}_i\right\|^2.
\end{equation}

Because $\mathbf{z},\mathbf{z}_i\in S_z$, by definition $\left\|\mathbf{d}_i\right\|\leq\Delta, \forall i\in \{1,\ldots,n_s$\}. Replacing $\left\|\mathbf{d}_i\right\|$ with $\Delta$ 
and factoring it out from the sum results in
\begin{equation}
    \left\|\mathbf{e}_\Phi(\mathbf{z})\right\|
\leq
\frac{\kappa_\Phi}{2}
\Delta^2\sum_{i=1}^{n_s}\alpha_i.
\end{equation}
Noting that $\alpha_i$ sums to 1, we obtain Eq.~\ref{eq:error_bound}. 
\end{proof}

This lemma is not specific to SE(3), and shows that the Euclidean TBM approximation error is quadratically bounded by the diameter of the samples used to construct the approximation. For the $\mathrm{SE}(3)$ approximation, the true propagated pose and the TBM-approximated pose are both represented relative to the next nominal pose  $\bar T_{k+1}$ as 
\begin{equation} \label{eq:nominal_relative_poses}
T_{\mathrm{true}}
=\bar T_{k+1}\Exp\!\left(\Phi(\mathbf{z})^\wedge\right), \text{ }
T_{\mathrm{TBM}}
=\bar T_{k+1}\Exp\!\left(\widetilde{\Phi}(\mathbf{z})^\wedge\right).
\end{equation}
The tangent-space error $\mathbf{e}_\Phi(\mathbf{z})$ is expressed in the Lie algebra associated with $\bar T_{k+1}$. Because SE(3) is non-commutative and non-linear, different Lie algebras will produce different errors between the same given poses.  Therefore, we want to compute the error between the TBM approximation and the true propagated pose directly.  We define
\begin{equation} \label{eq:true_error_def}
\boldsymbol{\varepsilon}_T
\triangleq
\Log\!\left(T_{\mathrm{true}}^{-1}T_{\mathrm{TBM}}\right)^\vee.
\end{equation} to be the error between the TBM approximation and the true pose in the true pose's frame of reference.

\begin{theorem}


Assume that the sampled rotations remain within a local region that does not cross the $\pi$ singularity of the logarithmic map, and that the relative rotation between $T_{\mathrm{true}}$ and $T_{\mathrm{TBM}}$ also remains away from $\pi$. Then there exists a local Lipschitz constant $L_\Psi$ such that
\begin{equation}\label{eq:etrue_bound}
\left\|\boldsymbol{\varepsilon}_T\right\|
\leq
L_\Psi\left\|\mathbf{e}_\Phi(\mathbf{z})\right\|
\leq
\frac{L_\Psi\kappa_\Phi}{2}\Delta^2.
\end{equation}
\end{theorem}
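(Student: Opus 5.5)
The plan is to write $\boldsymbol{\varepsilon}_T$ as the value of a smooth map of the two tangent vectors $\Phi(\mathbf{z})$ and $\widetilde{\Phi}(\mathbf{z})$. This map vanishes when its arguments coincide, so a Lipschitz argument reduces the group error to the tangent-space error. Lemma~\ref{lem:tbm_bound} then gives the rest. Substituting Eq.~\eqref{eq:nominal_relative_poses} into Eq.~\eqref{eq:true_error_def}, the nominal pose cancels:
\begin{equation*}
\boldsymbol{\varepsilon}_T=\Log\!\left(\Exp\!\left(-\Phi(\mathbf{z})^\wedge\right)\Exp\!\left(\widetilde{\Phi}(\mathbf{z})^\wedge\right)\right)^\vee .
\end{equation*}
Here we use $\bar T_{k+1}^{-1}\bar T_{k+1}=I$ and $\Exp(\mathbf{a}^\wedge)^{-1}=\Exp(-\mathbf{a}^\wedge)$. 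Define $\Psi(\mathbf{a},\mathbf{b})=\Log\!\left(\Exp(-\mathbf{a}^\wedge)\Exp(\mathbf{b}^\wedge)\right)^\vee$, so that $\boldsymbol{\varepsilon}_T=\Psi(\Phi(\mathbf{z}),\widetilde{\Phi}(\mathbf{z}))$. The key observation is that $\Psi(\mathbf{a},\mathbf{a})=0$ for every $\mathbf{a}$.

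Next, fix $\mathbf{a}=\Phi(\mathbf{z})$ and consider $\psi_{\mathbf{a}}(\mathbf{b})=\Psi(\mathbf{a},\mathbf{b})$ along the segment $\mathbf{b}(t)=\mathbf{a}+t(\widetilde{\Phi}(\mathbf{z})-\mathbf{a})$, $t\in[0,1]$. The theorem's hypotheses place the sampled rotations and the relative rotation away from the $\pi$ singularity. I will take them to mean that this whole segment, and the relative rotations $\Exp(-\mathbf{a}^\wedge)\Exp(\mathbf{b}(t)^\wedge)$, stay in a compact region $U$ on which $\Log$ is smooth. On $U$, $\psi_{\mathbf{a}}$ is $C^1$, with derivative equal to the inverse right Jacobian of $\mathrm{SE}(3)$ evaluated at $\psi_{\mathbf{a}}(\mathbf{b})$, composed with the right Jacobian at $\mathbf{b}$. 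Set $L_\Psi=\sup_{U}\|D_{\mathbf{b}}\Psi\|$, which is finite by compactness and continuity. The mean value inequality then gives
\begin{equation*}
\|\boldsymbol{\varepsilon}_T\|=\|\psi_{\mathbf{a}}(\widetilde{\Phi}(\mathbf{z}))-\psi_{\mathbf{a}}(\mathbf{a})\|\le L_\Psi\|\widetilde{\Phi}(\mathbf{z})-\Phi(\mathbf{z})\|=L_\Psi\|\mathbf{e}_\Phi(\mathbf{z})\| .
\end{equation*}
Applying Lemma~\ref{lem:tbm_bound} to the right-hand side yields $\frac{L_\Psi\kappa_\Phi}{2}\Delta^2$, which is Eq.~\eqref{eq:etrue_bound}.

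The main obstacle is making the Lipschitz step rigorous. The straight segment between the two tangent vectors must stay inside the domain where $\Log\circ$ (product of exponentials) is smooth, and the statement's hypotheses only speak about endpoints and sampled rotations. I would handle this by defining the local region $U$ as a convex, hence segment-containing, neighborhood in tangent coordinates, bounded so that all relative rotation angles stay below some $\theta_{\max}<\pi$. This is the reading of the hypotheses I adopt. On $U$, the inverse Jacobian of $\mathrm{SO}(3)$ has singular values bounded by a function of $\theta_{\max}$, which blows up only as $\theta_{\max}\to\pi$; this makes $L_\Psi$ explicit. Note that near the origin $L_\Psi\approx 1$, consistent with $\Psi(\mathbf{a},\mathbf{b})\approx\mathbf{b}-\mathbf{a}$ to first order.
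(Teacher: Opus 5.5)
Your proposal is correct and follows the paper's proof. It uses the same map $\Psi$, the same observation $\Psi(\mathbf{a},\mathbf{a})=0$, a Lipschitz bound in the second argument, and then Lemma~\ref{lem:tbm_bound}. Your mean-value inequality over a compact convex region, which makes the whole segment from $\Phi(\mathbf{z})$ to $\widetilde{\Phi}(\mathbf{z})$ avoid the $\pi$ cut, just makes rigorous the step the paper asserts in one line.

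One side remark is wrong, though it does not affect the argument. The $\mathrm{SO}(3)$ inverse Jacobian does not blow up as $\theta\to\pi$: its largest singular value is $\theta/(2\sin(\theta/2))$, which equals $\pi/2$ at $\theta=\pi$ and diverges only at $2\pi$. The $\pi$ hypothesis is needed because the principal $\Log$ is discontinuous across its branch cut, not because the derivative bound degenerates.
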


\begin{proof}
Substituting Eq.~\eqref{eq:nominal_relative_poses} into Eq.~\eqref{eq:true_error_def} gives
\begin{equation}
\left\|\boldsymbol{\varepsilon}_T\right\|
=
\left\|
\Log\!\left[
\Exp\!\left(-\Phi(\mathbf{z})^\wedge\right)
\Exp\!\left(\widetilde{\Phi}(\mathbf{z})^\wedge\right)
\right]^\vee
\right\|.
\end{equation}
Define
${
\Psi(\mathbf{a},\mathbf{b})
=
\Log\!\left[
\Exp(-\mathbf{a}^\wedge)
\Exp(\mathbf{b}^\wedge)
\right]^\vee,
\mathbf{a},\mathbf{b}\in\mathbb{R}^6.
}$
Then
$ {\label{eq:error_true_psi_no_bound}
\left\|\boldsymbol{\varepsilon}_T\right\|
=
\left\|
\Psi\!\left(\Phi(\mathbf{z}),\widetilde{\Phi}(\mathbf{z})\right)
\right\|.}
$
Within the assumed local region the difference between two poses will be finite and $\Psi$ is locally Lipschitz in its second argument, so
\begin{equation}
\left\|
\Psi(\mathbf{a},\mathbf{b})-\Psi(\mathbf{a},\mathbf{a})
\right\|
\leq
L_\Psi\left\|\mathbf{b}-\mathbf{a}\right\|.
\end{equation}
By definition,
$
\Psi(\mathbf{a},\mathbf{a})
=
\Log\!\left[
\Exp(-\mathbf{a}^\wedge)\Exp(\mathbf{a}^\wedge)
\right]^\vee
=0,
$
and 
\begin{align}
\left\|\boldsymbol{\varepsilon}_T\right\|
&\leq
L_\Psi
\left\|
\widetilde{\Phi}(\mathbf{z})-\Phi(\mathbf{z})
\right\| \leq L_\Psi\left\|\mathbf{e}_\Phi(\mathbf{z})\right\|.
\label{eq:lip_bound}
\end{align}
Applying Lemma~\ref{lem:tbm_bound} yields Eq.~\eqref{eq:etrue_bound}.
\end{proof}

 For close pose transitions between knot points the Log map's Lipschitz constant should remain well-behaved. For certain nonsmooth functions, such as signed-distance functions, we conjecture that the variation of the subgradient might be bounded by a finite constant, extending the results of Lemma~\ref{lem:tbm_bound}. Establishing this rigorously is left for future work.


\section{Results}\label{sec:results}

The results first validate the theoretical TBM error bounds and characterize how approximation error scales with bundle size. We then evaluate the proposed $\mathrm{SE}(3)$ formulation in trajectory-optimization experiments involving acrobatic maneuvers and collision-avoidance constraints.

\subsection{Empirical Error Validation} \label{sec:error-scaling-results}

We validate the local interpolation bound for $\mathrm{SE}(3)$ TBM by measuring the sampled dynamics error of a single propagation step.  For each
bundle scale factor $\gamma_b$, a regular simplex was constructed in the
12-dimensional coordinate
$\mathbf{z}_{k,i}=[\boldsymbol{\xi}_{k,i},\boldsymbol{\nu}_{k,i}]^\top,$
where $\boldsymbol{\xi}\in\mathbb{R}^6$ is the local pose perturbation and $\boldsymbol{\nu}\in\mathbb{R}^6$ is the body-frame twist.  The simplex has
13 vertices, as expected for a full-dimensional simplex in 12 dimensions.
At each vertex, the dynamics
\begin{equation}
\boldsymbol{\xi}_{k+1}
=f_{\mathrm{kin}}(\boldsymbol{\xi}_k,\boldsymbol{\nu}_k)
=
\Log\!\left[
\Exp(\boldsymbol{\xi}_k^\wedge)
\Exp((t_s\boldsymbol{\nu}_k)^\wedge)
\right]^\vee
\end{equation}
were evaluated, where $t_s$ represents the integration timestep. 


The experiment used a step size $t_s=0.1$. The interpolation weights $\boldsymbol{\alpha}$ that generate $\mathbf{z}=\mathbf{W}_z\boldsymbol{\alpha}$ and the TBM approximation $f_{\mathrm{kin}}(\mathbf{W}_z\boldsymbol{\alpha})\approx\mathbf{W}_f\boldsymbol{\alpha}$ were sampled from a uniform Dirichlet distribution. The sweep used 25 bundle scale factors $\gamma_b$ from $10^{-3}$ to $4\times10^{-1}$. Because the simplex geometry is fixed and uniformly scaled, its diameter satisfies $\Delta=\gamma_b\Delta_0$, where $\Delta_0$ is the diameter of the unscaled simplex. At each scale, 5000 interior query points were sampled and propagated, making 125000 error samples in total. 


\begin{figure}[t]
    \centering
    \includegraphics[width=0.75\linewidth]{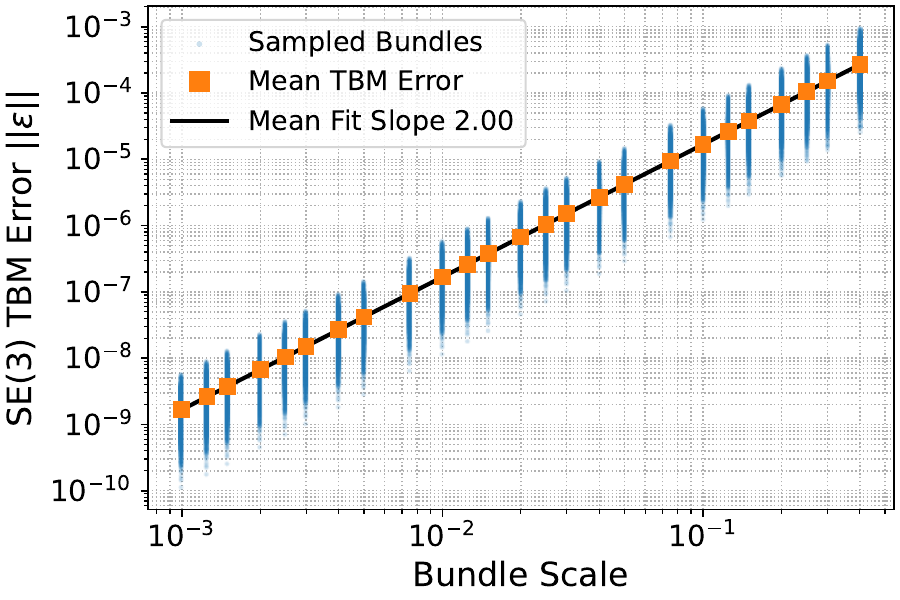}
    \caption{Log-log validation of the $\mathrm{SE}(3)$ TBM error. Blue points show sampled bundle errors $|\boldsymbol{\varepsilon}_T|_2$ in the true propagated pose frame, orange squares show the mean error at each bundle scale $\gamma_b$, and the black line is a least-squares fit to these means. The fitted slope of 2.00043 confirms the predicted quadratic dependence on bundle diameter.}
    \vspace{-5mm}
    \label{fig:error-scaling}
\end{figure}

As shown in Figure~\ref{fig:error-scaling}, the fitted log-log slope of the mean error was 2.00043 with a maximum relative residual of $0.0037$. A median-error fit produced a slope of 2.00047 with a maximum relative residual of $0.0049$, confirming that both statistics closely follow the predicted scaling. Since $\Delta\propto\gamma_b$ for the uniformly scaled simplex, these results are consistent with the $\Delta^2$ interpolation-error bound and numerically validate the quadratic error growth predicted for $\mathrm{SE}(3)$ TBM.

\subsection{Rotated Aperture Experiment}

Bundle-scale effects were evaluated by planning a collision-free fixed-wing maneuver through a rotated rectangular aperture from a dynamically infeasible initial trajectory. This stress test exposes the trade-off between small bundles, which limit motion between iterations, and large bundles, which increase approximation error. PyFlyt~\cite{tai2023pyflyt} provided black-box aerodynamics, rigid-body dynamics, and collision computations, and CVXPY solved the resulting optimization problem.

The $4\times1.5$ m aperture was rotated by $30^\circ$, $60^\circ$, and $90^\circ$, while the aircraft frontal profile measured $2.2\times0.45$ m. Increasing aperture rotation therefore required increasingly aggressive attitude changes to maintain clearance. An $\mathrm{SE}(3)$ state representation was used to ensure that Euler-angle singularities or gimbal-lock artifacts were avoided during optimization.

Trajectories used $K+1=41$ knots separated by $0.1$ s over a 4 s horizon, with the aperture crossed at the midpoint knot $k_g=20$. The initial guess was a straight trim trajectory whose middle six knots were rotated with the aperture. This produced an approximately collision-free but dynamically infeasible trajectory, requiring the optimizer to restore dynamic feasibility while preserving clearance.

Bundles used states $\mathbf{x}=[\boldsymbol{\xi},\dot{\boldsymbol{\xi}}]^\top$ and controls $\mathbf{u}=[u_{\mathrm{roll}},u_{\mathrm{pitch}},u_{\mathrm{yaw}},u_{\mathrm{thrust}}]^\top$.  These control channels are passed through PyFlyt's control surface mixer to produce flight commands, which the optimizer considers part of the black-box dynamics. State and control coordinates were sampled in positive and negative directions, except for non-negative throttle, yielding $n_s=192$ state-control pairs per knot and 7872 per iteration. PyFlyt's control-surface mixer remained within the black-box dynamics. The scale factor $\gamma_b$ uniformly scaled bundle offsets and therefore the bundle diameter $\Delta$.

For each iteration, the trajectory was obtained by solving
\begin{align}
\min_{\boldsymbol{\alpha},\,\mathbf{s}_{\mathrm{dyn}},\,\mathbf{s}_{\mathrm{col}}}\quad
    & \mu_{\mathrm{gate}}
      \left\|
      \mathbf{W}_{x,k_g}\boldsymbol{\alpha}_{k_g}
      -\mathbf{x}_{\mathrm{gate}}
      \right\|_2^2 \\
    &+ \mu_{\mathrm{dyn}}
      \sum_{k=0}^{K-1}\left\|\mathbf{s}_{\mathrm{dyn},k}\right\|_2^2
    + \mu_{\mathrm{col}}
      \sum_{k=0}^{K}\left\|\mathbf{s}_{\mathrm{col},k}\right\|_2^2  \nonumber \\
\text{s.t.}\quad
    & \mathbf{W}_{x,k+1}\boldsymbol{\alpha}_{k+1}
      +\mathbf{s}_{\mathrm{dyn},k}
      =\mathbf{W}_{f,k}\boldsymbol{\alpha}_k, \nonumber\\
    & \mathbf{W}_{c,k}\boldsymbol{\alpha}_k
      +\mathbf{s}_{\mathrm{col},k}
      \geq c_{\min}, \nonumber
\end{align}
where $c_{\min}=0.05$ m. The first term penalizes deviation of the trajectory at the aperture from the desired state with weight $\mu_{\mathrm{gate}}=10^2$. The slack variables $\mathbf{s}_{\mathrm{dyn},k}$ and $\mathbf{s}_{\mathrm{col},k}$ penalize inconsistency with the system dynamics and collision constraints and are weighted by $\mu_{\mathrm{dyn}}=10^6$ and $\mu_{\mathrm{col}}=10^4$, respectively. The matrices $\mathbf{W}_{x,k}$, $\mathbf{W}_{f,k}$, and $\mathbf{W}_{c,k}$ contain the $k$th knot's sampled states, propagated dynamics outputs, and sampled collision clearances, respectively. The optimization was considered to have converged if the dynamics residual was less than $10^{-4}$, the collision clearance was greater than $0.1$ m, and the improvement in objective was less than $10^{-2}$ for three iterations.

We validated these trajectories using a pose-tracking aircraft controller in PyFlyt. Figure~\ref{fig:se3-gate-90-3d} shows the $\gamma_b=1$ solution for the $90^\circ$ aperture. PyFlyt simulated physics at 180 Hz, 18 times faster than the 10 Hz discretization used during planning. Figure~\ref{fig:se3-gate-90-controller} shows the controller tracking over time for the maneuver. This result provides an independent simulation and controller check that the trajectory planned with black-box dynamics at a time step of 0.1 s is executable when evaluated at the higher simulation rate.

\begin{figure}[!ht]
    \centering
    \includegraphics[width=\linewidth]{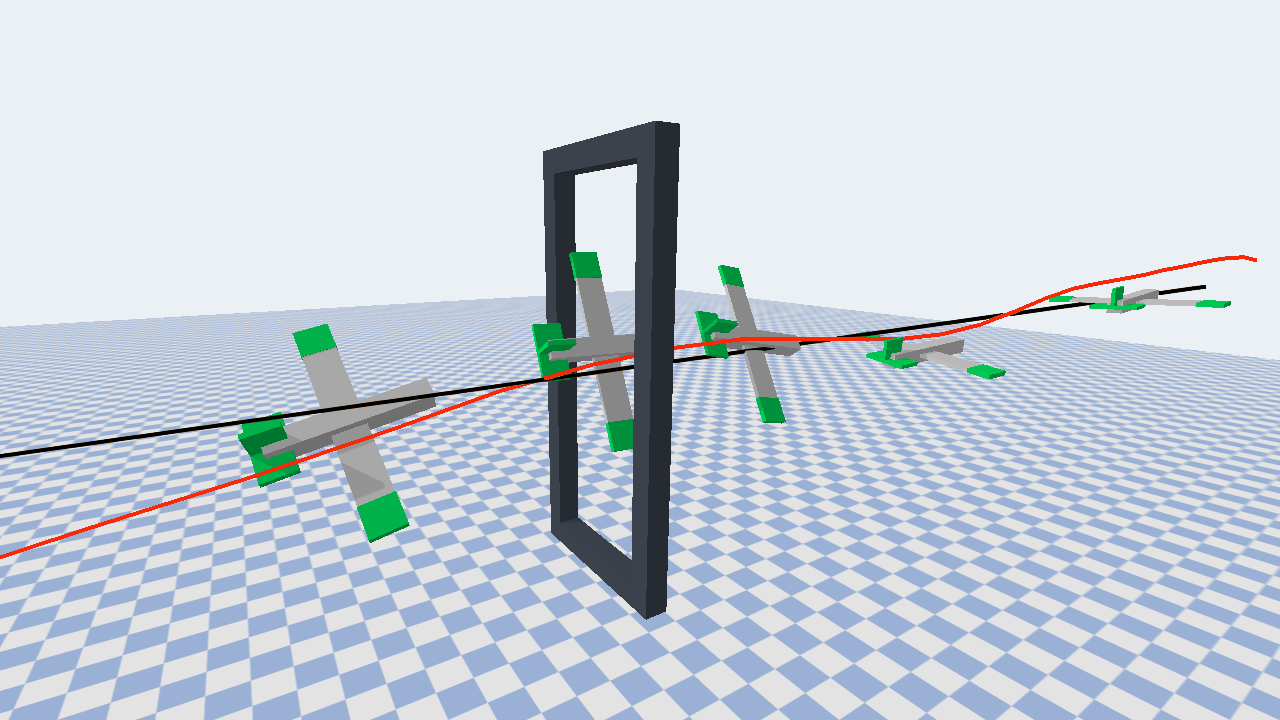}
    \caption{Closed-loop simulation of the $\gamma_b=1$ trajectory through the $90^\circ$ rotated aperture. The infeasible initial trajectory is shown in black and the optimized trajectory in red. The aircraft is gray with green wing tips, and the aperture is dark gray. The controller runs at 180 Hz, 18 times the temporal resolution used for trajectory optimization.}
    \vspace{-3mm}
    \label{fig:se3-gate-90-3d}
\end{figure}

\begin{figure}[!ht]
    \centering
    \includegraphics[width=1.0\linewidth]{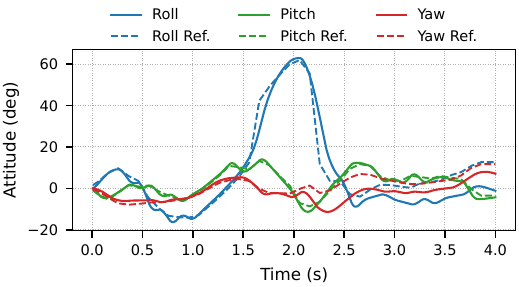}
    \caption{Controller tracking attitude over time for the $\gamma_b=1$ trajectory through the $90^\circ$ rotated aperture. The planned trajectory was flyable at the increased simulation resolution.}
    \vspace{-5mm}
    \label{fig:se3-gate-90-controller}
\end{figure}

\begin{table}[h]
\caption{SE(3) Gate Scale Sweep Results} \label{tab:se3-gate-scale-sweep-fixed-combined}
\begin{tabular}{|c|c|c|c|c|c|}
\hline
Gate & Metric & 0.5 & 1 & 2 & 4 \\
\hline
 & Converged & \cmark & \cmark & \cmark & \cmark \\
 & Iterations & 11 & \textbf{10} & \textbf{10} & 11 \\
$\textbf{30}^{\circ} $ & Objective & 1472 & 2394 & 2417 & \textbf{1443} \\
 \rotbox{30}& Dynamics violation & 0 & 0 & 0 & 0 \\
 & Clearance & \textbf{0.4242} & 0.4123 & 0.3881 & 0.3992 \\
 & Control Success? & \cmark & \cmark & \cmark & \cmark \\
\hline
 & Converged & \xmark & \cmark & \cmark & \cmark \\
 & Iterations & 7 & \textbf{20} & \textbf{20} & 26 \\
$\textbf{60}^{\circ}$ & Objective & 1208 & 2102 & 2002 & \textbf{1834} \\
 \rotbox{60}& Dynamics violation & 0.4554 & 0 & 0 & 0 \\
 & Clearance & 0.4501 & 0.1149 & 0.1265 & \textbf{0.1422} \\
 & Control Success? & - & \cmark & \cmark & \cmark \\
\hline
 & Converged & \xmark & \cmark & \cmark & \xmark \\
 & Iterations & 41 & 30 & \textbf{24} & 25 \\
$\textbf{90}^{\circ}$ & Objective & 1616 & \textbf{1676} & 1715 & 1613 \\
 \rotbox{90}& Dynamics violation & 0.09513 & 0 & 0 & 0.2604 \\
 & Clearance & 0.1035 & 0.1009 & \textbf{0.1117} & 0.1098 \\
 & Control Success? & - & \cmark & \cmark & - \\
\hline
\end{tabular}
\vspace{-5mm}
\end{table}

Table~\ref{tab:se3-gate-scale-sweep-fixed-combined} shows the effect of bundle scale as the required maneuver becomes increasingly aggressive. The reported objective is only the first term of the cost function, with the other terms reported instead as a dynamics residual and the distance between the aircraft and the aperture. Validation in simulation was only applied to trajectories that converged. 

For the $30^\circ$ aperture, all tested bundle scales converged to dynamically feasible trajectories and successfully passed the subsequent simulation test. The number of iterations was also nearly independent of bundle scale, ranging only from 10 to 11 iterations; however, objectives ranged more widely. While objectives varied between all runs, all trajectories were successful in simulation.  

The effect of bundle size is more apparent for the $60^\circ$ aperture. With $\gamma_b=0.5$, the optimizer terminates with a dynamics violation of $0.4554$, while $\gamma_b=1$, $2$, and $4$ all converge to feasible trajectories with zero dynamics residual. Because the initial trajectory is dynamically infeasible, the bundle must be large enough to move sufficiently far from the initial guess over the first few iterations to recover feasibility. The more aggressive maneuver also requires roughly twice as many iterations as the $30^\circ$ case.

The $90^\circ$ aperture illustrates the opposing limitation of excessively large bundles. The small bundle with $\gamma_b=0.5$ is again unable to overcome the initial dynamics violation. In contrast, $\gamma_b=1$ and $\gamma_b=2$ both converge to dynamically feasible trajectories and subsequently succeed in simulation. Increasing the bundle scale to $\gamma_b=4$, however, results in a dynamics violation of $0.2604$ and a failed simulation. In this case, the bundle was large enough to fix dynamic infeasibility, only to over-refine the objective and create new infeasibilities in the next iteration.


As a rule of thumb, we found that the scales of the bundle axes should be set to a quantity that can reasonably be achieved during the integration window between knots. For example, the position samples should be separated by a distance that could reasonably be covered by the vehicle in 0.1 seconds at trim conditions. The bundle scales $\gamma_b\in\{1,2\}$ adhered roughly to this rule of thumb with respect to the real aircraft dynamics and, as a result, produced feasible and flyable results even when making large maneuvers.


\section{Conclusion}\label{sec:conclusion}


These results establish a geometric formulation of TBM that respects the nonlinear structure of rigid-body motion while retaining a practical interpretation of bundle size as a tuning parameter. Future work includes extending the approach to incorporate uncertainty and disturbances, and evaluating the method on higher-fidelity and hardware-based platforms.

\addtolength{\textheight}{-12cm}   




\section*{ACKNOWLEDGMENT}

This project was supported by NSF SBIR Phase 2 Award Number 2404858 and 4D Avionic Systems, LLC.


The authors used ChatGPT to improve the paper’s readability and wording, then thoroughly reviewed and edited the content for accuracy and coherence. The authors take full responsibility for the content of the published article.


\bibliographystyle{IEEEtran}
\bibliography{references}

\end{document}